\definecolor{fxnote}{HTML}{268bd2}
\definecolor{fxtarget}{RGB}{220,50,47}
\newtheorem{theorem}{Theorem}[section]
\theoremstyle{definition}
\newtheorem{remark}[theorem]{Remark}
\newtheorem{example}[theorem]{Example}
\def\RSet{\mathbb{R}}
\newcommand{\N}{\mathbb{N}}
\newcommand{\R}{\mathbb{R}}
\DeclareMathOperator{\argmin}{argmin}
\icmltitlerunning{Bilevel Programming for Hyperparameter Optimization and Meta-Learning}
\begin{document}

\twocolumn[
\icmltitle{
Bilevel Programming for Hyperparameter Optimization and Meta-Learning}
%



\icmlsetsymbol{equal}{*}

\begin{icmlauthorlist}
\icmlauthor{Luca Franceschi}{iit,ucl}
\icmlauthor{Paolo Frasconi}{ufi}
\icmlauthor{Saverio Salzo}{iit}
\icmlauthor{Riccardo Grazzi}{iit}
\icmlauthor{Massimiliano Pontil}{iit,ucl}
\end{icmlauthorlist}

\icmlaffiliation{iit}{Computational Statistics and Machine Learning, Istituto Italiano di Tecnologia, Genoa, Italy}
\icmlaffiliation{ucl}{Department of Computer Science, University College London, London, UK}
\icmlaffiliation{ufi}{Department of Information Engineering, Universit{\`a} degli Studi di Firenze, Florence, Italy}

\icmlcorrespondingauthor{Luca Franceschi}{luca.franceschi@iit.it}

\icmlkeywords{Machine Learning, ICML}

\vskip 0.3in
]



\printAffiliationsAndNotice{}  

\begin{abstract}
  We introduce a framework based on bilevel programming that unifies gradient-based hyperparameter optimization and meta-learning. We show that an approximate version of the bilevel problem can be solved by taking into explicit account the optimization dynamics for the inner objective.  Depending on the specific setting, the outer variables take either the meaning of hyperparameters in a supervised learning problem or parameters of a meta-learner.
We provide sufficient conditions under which solutions of the approximate problem converge to those of the exact problem. We instantiate our approach for meta-learning in the case of  deep learning where representation layers are treated as hyperparameters shared across a set of training episodes. 
In experiments, we confirm our theoretical findings, present encouraging results for few-shot learning and contrast the bilevel approach against classical approaches for learning-to-learn.
\end{abstract}

\section{Introduction} \label{sec:intro}

While in standard supervised learning problems we seek the best hypothesis in a given space and with a given learning algorithm, in hyperparameter optimization (HO) and meta-learning (ML)
we seek a configuration so that the 
optimized
learning algorithm
will produce a model that generalizes well to new data.
The search space in ML often incorporates choices
associated with the hypothesis space and the features of the learning
algorithm itself (e.g., how optimization of the training loss is
performed).  Under this common perspective, both HO and ML essentially
boil down to \textit{nesting two search problems}: at the inner level
we seek a good hypothesis (as in standard supervised learning) while
at the outer level we seek a good configuration (including a good
hypothesis space) where the inner search takes place.  Surprisingly,
the literature on ML has little overlap with the literature on HO and
in this paper we present a unified framework encompassing both of them.

Classic approaches to HO ~\citep[see e.g.][for
a survey]{hutter_beyond_2015} have been only able to manage a relatively small number of
hyperparameters, from a few dozens using random
search~\cite{bergstra_random_2012} to a few hundreds using
Bayesian or model-based
approaches~\cite{bergstra_making_2013,snoek_practical_2012}.  
Recent gradient-based techniques for HO, however, have significantly
increased the number of hyperparameters that can be optimized
\cite{domke_generic_2012,maclaurin_gradient-based_2015,%
  pedregosa2016hyperparameter,franceschi_forward_2017}
and it is now possible to tune as hyperparameters entire weight
vectors associated with a neural network layer. In this way, it
becomes feasible to design models that possibly have more hyperparameters than parameters.
Such an approach is well suited for ML, since parameters are learned from a small dataset, whereas hyperparameters leverage multiple available datasets.

HO and ML only differ substantially in terms of the 
experimental settings in which they are evaluated. While in HO the available data is associated with a single task and split into a training set (used to tune the parameters) and a validation set (used to tune the hyperparameters), in ML we are often interested in
the so-called \textit{few-shot}
learning setting where data comes in
the form of short episodes (small datasets with few examples per
class) sampled from a common probability distribution over supervised
tasks.

Early work on ML dates
back at least to the 1990's~\cite{schmidhuber_learning_1992,baxter1995learning,thrun_learning_2012}
but this research area has received considerable attention in the
last few years, mainly driven by the need in real-life and industrial scenarios
for learning quickly a vast multitude of tasks. 
These tasks, or \emph{episodes}, may appear and evolve
continuously over time and may only contain few examples~\cite{lake_building_2017}.
Different strategies have emerged
to tackle ML. Although they do overlap in some aspects, it is
possible to identify at least four of them. The \textit{metric
  strategy} attempts to use training episodes to construct embeddings
such that examples of the same class are mapped into similar
representations. It has been instantiated in several variants that
involve non-parametric (or instance-based)
predictors~\cite{koch_siamese_2015,vinyals_matching_2016,%
  snell_prototypical_2017}.
In the related \textit{memorization strategy}, the meta-learner learns
to store and retrieve data points representations in
memory. It can be implemented either using recurrent
networks~\cite{santoro_meta-learning_2016} or temporal
convolutions~\cite{mishra2018ASimpleICLR}. The use of an attention
mechanism~\cite{vaswani_attention_2017} is crucial both
in~\cite{vinyals_matching_2016} and in~\cite{mishra2018ASimpleICLR}.  The
\textit{initialization strategy}~\cite{Sachin2017,%
  finn_model-agnostic_2017} uses training episodes to infer a good
initial value for the model's parameters so that new tasks can be
learned quickly by fine tuning. The
\textit{optimization
strategy}~\cite{andrychowicz_learning_2016,Sachin2017,wichrowska2017learnedICML} 
forges an optimization algorithm that will find it easier to learn on novel related tasks.

A
main contribution of this paper is
a unified view of HO and ML within
the natural mathematical framework of bilevel programming, where an outer
optimization problem is solved subject to the optimality of an inner
optimization problem. In HO the outer problem involves hyperparameters while
the inner problem is usually the minimization of an empirical loss.
In ML the outer problem could involve a shared representation among
tasks while the inner problem could concern classifiers for individual tasks. 
Bilevel programming~\cite{bard_practical_2013} has been suggested before in
machine learning in the context of kernel methods and support vector
machines~\citep{keerthi2007efficient,kunapuli_classification_2008},
multitask learning \cite{flamary2014learning},  
and more recently HO~\cite{pedregosa2016hyperparameter},
but never in the context of ML\@. The resulting framework outlined 
in Sec.~\ref{sec:framework} encompasses some existing approaches to
ML, in particular those based on the initialization and the optimization strategies. 

\begin{table}[t]
  \centering
  \caption{Links and naming conventions among different fields.}
  \label{tab:namings}
  \begin{small}
    \begin{tabular}{p{.27\textwidth}p{.27\textwidth}p{.3\textwidth}}
      \toprule
      {Bilevel\newline programming} &  {Hyperparameter\newline optimization} & {Meta-learning} \\
      \midrule
      {Inner variables} & {Parameters} & {Parameters of\newline Ground models} \\
      {Outer variables} & {Hyperparameters} & {Parameters of\newline Meta-learner} \\
      {Inner objective} & {Training error} & {Training errors\newline on tasks (Eq. 3)} \\
      {Outer objective} & {Validation error} & {Meta-training\newline error (Eq. 4)} \\
      \bottomrule
    \end{tabular}
  \end{small}
\end{table}

A technical difficulty arises when the solution to the inner problem
cannot be written analytically (for example this happens when using
the log-loss for training neural networks) and one needs to resort to
iterative optimization approaches. As a second contribution, we
provide in Sec.~\ref{sec:analysis} sufficient conditions that 
guarantee good approximation properties. We observe that these 
conditions are reasonable and apply to concrete problems relevant 
to applications.

In Sec. \ref{sec:hyper}, 
by taking inspiration on early work on representation learning in the context of multi-task and meta-learning \citep{baxter1995learning,caruana_multitask_1998},
we instantiate the framework 
for ML in a simple way treating the weights of the last layer of a neural 
network as the inner variables and the remaining weights,
which parametrize the representation mapping, 
as the outer variables. As shown in
Sec.~\ref{sec:ex}, the resulting ML algorithm performs well
in practice, outperforming most of the existing strategies 
on MiniImagenet. 

\section{A bilevel optimization framework}
\label{sec:framework}
In this paper, we consider bilevel optimization problems \citep[see
e.g.][]{colson2007overview} of the form 
\begin{equation}
\min \{ f(\lambda) : \lambda \in \Lambda\},
\label{eq:f}
\end{equation}
where function $f:\Lambda \rightarrow \mathbb{R}$ is defined at $\lambda\in \Lambda$ as
\begin{equation}
f(\lambda) = \inf \{ E(w_{\lambda}, \lambda ) : w_{\lambda} \in  {\rm arg}\min_{u\in \mathbb{R}^d} L_{\lambda}(u) \}.
\label{eq:def_f}
\end{equation}
We call $E:\mathbb{R}^d \times \Lambda\to\RSet$ the \emph{outer objective} 
and, for every $\lambda \in \Lambda$, we call $L_{\lambda}:\RSet^d\to\RSet$ the \emph{inner objective}.  
Note that $\{L_{\lambda} : \lambda \in \Lambda\}$ is
a class of objective functions parameterized by $\lambda$. 
Specific instances of this problem include HO and ML, which we discuss next.
Table \ref{tab:namings} outlines the links among bilevel programming,
HO and ML.


\subsection{Hyperparameter Optimization}
\label{sec:HO}

In the context of hyperparameter optimization, we are interested in
minimizing the validation error of a model
$g_w:\mathcal{X}\to\mathcal{Y}$ parameterized by a vector $w$, with
respect to a vector of hyperparameters $\lambda$.
For example, we may consider representation or regularization hyperparameters 
that control the hypothesis space or penalties, respectively.
In this setting, a prototypical choice for the inner objective is the
regularized empirical error 
$$
L_\lambda(w) = \sum_{(x,y) \in D_{\rm tr}}\ell(g_w(x),y) + \Omega_\lambda(w),
$$
where $D_{\operatorname{tr}}=\{(x_i,y_i)\}_{i=1}^n$ is a set of
input/output points,
$\ell$ is a prescribed loss function, and $\Omega_\lambda$ a regularizer
parameterized by $\lambda$. The outer objective represents a proxy for
the generalization error of $g_{w}$, and it may be given by the average loss on a
validation set $D_{\operatorname{val}}$
\[
E(w,\lambda) = \sum_{(x,y) \in D_{\rm val}} \ell(g_w(x),y).
\]
or, in more generality, by a cross-validation error, as detailed in Appendix B.
Note that in this setting, the outer objective $E$ does not depend explicitly 
on the hyperparameters $\lambda$, since in HO $\lambda$ is instrumental in finding 
a good model $g_w$, which is our final goal. As a more specific example, consider 
linear models, $g_w(x)=\langle w,x\rangle$, let $\ell$~be the square loss and let
$\Omega_\lambda(w) =\lambda \|w\|^2$, in which case the inner
objective is ridge regression 
(Tikhonov regularization) 
and the
bilevel problem optimizes over the regularization parameter the
validation error of ridge regression. 
\begin{figure}[t]
\includegraphics[width=.9\textwidth]{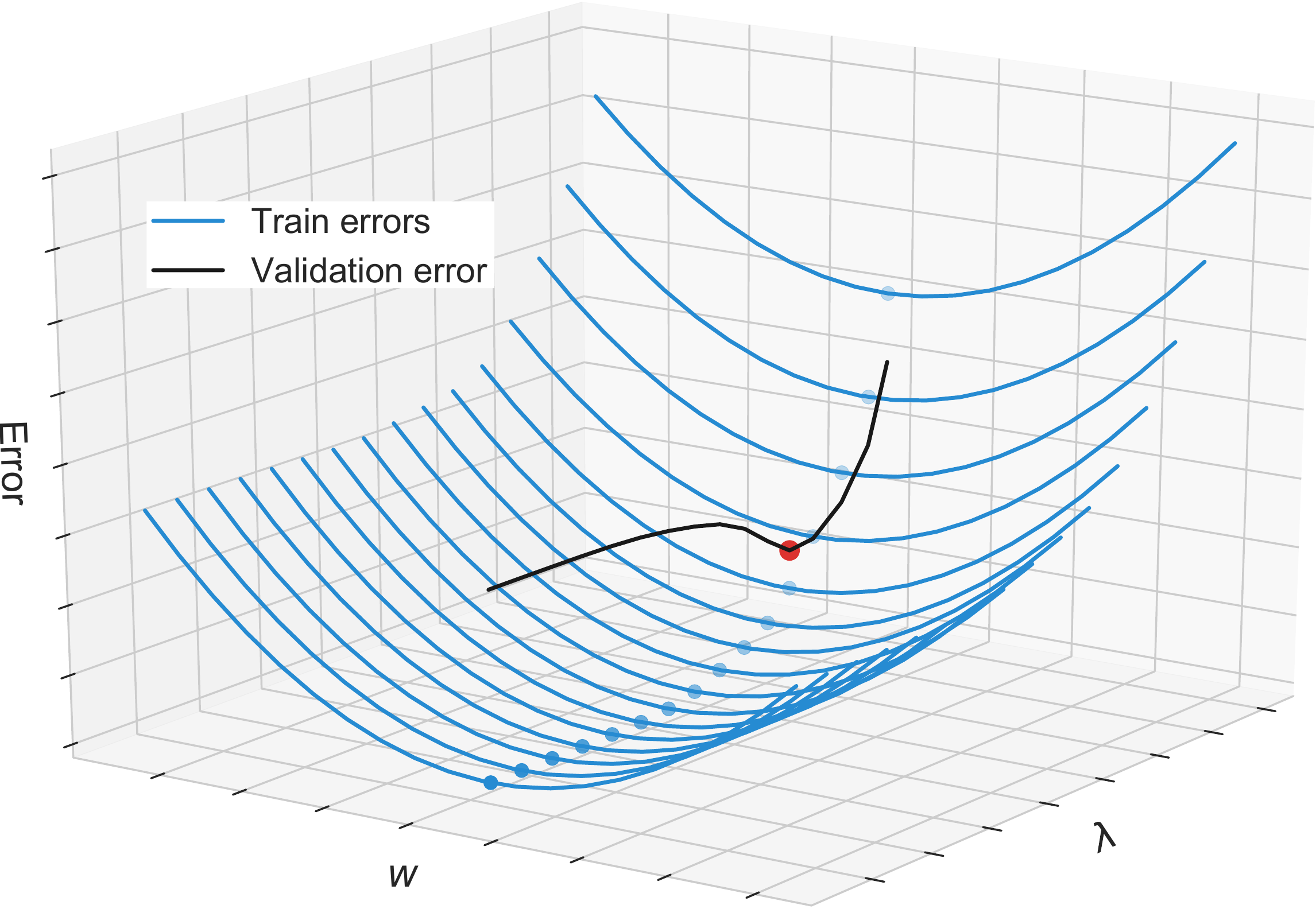}
\caption{\begin{small}
 \label{fig:bilevel} 
  Each blue line represents the average training error 
  when varying $w$ in $g_{w,\lambda}$ and the 
  corresponding inner minimizer is shown as a
  blue dot. The validation error evaluated at each minimizer yields the
  black curve representing the outer objective $f(\lambda)$, 
  whose minimizer is shown as a red dot. \end{small}}
\end{figure}

\subsection{Meta-Learning}
\label{sec:l2l}
In meta-learning (ML) the inner and outer
objectives are computed by averaging a training and a
validation error over multiple tasks, respectively. 
The goal is to produce a learning algorithm that will work well on novel 
tasks\footnote{The ML problem is also related to multitask learning, 
however in ML the goal is to extrapolate from the given tasks.}.
For this purpose, we have available a meta-training set
$\mathcal{D}=\{D^j=D^j_{\operatorname{tr}}\cup D^j_{\operatorname{val}}\}_{j=1}^{N}$, which is a
collection of datasets, 
sampled from a meta-distribution $\mathcal{P}$.
Each dataset
$D^j=\{(x_i^j,y_i^j)\}_{i=1}^{n_j}$ with
$(x_i^j,y_i^j)\in\mathcal{X}\times\mathcal{Y}^j$ is linked
to a specific task. Note that the output space is task dependent 
(e.g. a multi-class classification problem with variable number of classes). 
The model for each task is a function $g_{w^j,\lambda}:\mathcal{X}\to\mathcal{Y}^j$,
identified by a parameter vectors $w^j$ and hyperparameters $\lambda$. A key point
here is that $\lambda$ is shared between the tasks. With this notation the inner
and outer objectives are
\vspace{-5mm}
\begin{equation}
\label{eq:mlL}
L_\lambda(w) = \sum_{j=1}^N L^j(w^j, \lambda, D^j_{\operatorname{tr}}),
\end{equation}
\vspace{-3mm}
\begin{equation}
\label{eq:mlE}
E(w,\lambda) = \sum_{j=1}^N L^j(w^j, \lambda, D^j_{\operatorname{val}})
\end{equation}
respectively. The loss $L_j(w_j,\lambda,S)$ represents the empirical error of the pair $(w_j,\lambda)$ on a set of examples $S$. Note that the inner and outer losses for task $j$ use different train/validation splits of the corresponding dataset $D^j$. Furthermore, unlike in HO, in ML the final goal is to find a good $\lambda$ and the $w^j$ are now instrumental.

The cartoon in Figure \ref{fig:bilevel} illustrates ML as a
bilevel problem. 
The parameter $\lambda$ indexes an hypothesis space
within which the inner objective is minimized. A particular example, detailed in Sec. \ref{sec:hyper}, is to choose 
the model $g_{w,\lambda} = \langle w, h_\lambda(x)\rangle$, 
in which case $\lambda$ parameterizes a feature mapping. 
Yet another choice would be to consider
$g_{w^j,\lambda}(x) = \langle w + \lambda, x \rangle$, in which case $\lambda$ represents a common model around which task specific models are  to be found \citep[see e.g.][and reference therein]{evgeniou2005learning,finn_model-agnostic_2017,khosla2012undoing,kuzborskij2013n}.

\subsection{Gradient-Based Approach} \label{sec:gradapproach}
We now discuss a general approach to solve Problem (\ref{eq:f})-(\ref{eq:def_f}) when the hyperparameter vector $\lambda$ is real-valued.
To simplify our discussion let us assume 
that the inner
objective has a unique minimizer $w_\lambda$. Even in this simplified
scenario, Problem (\ref{eq:f})-(\ref{eq:def_f}) remains challenging to solve. Indeed, 
in general there is no closed form expression $w_\lambda$, so it is not possible to directly optimize the
outer objective function. 
While
a possible strategy (implicit differentiation) is to
apply the implicit function theorem 
to $\nabla L_{\lambda}=0$ 
\citep{pedregosa2016hyperparameter, 
koh2017understanding,
beirami2017optimal},
another 
compelling approach is to replace the 
inner problem with a dynamical system.
This point, discussed in
\cite{domke_generic_2012,maclaurin_gradient-based_2015,franceschi_forward_2017},
is developed further in this paper.

Specifically, we let $[T]=\{1,\dots,T\}$ where $T$ is a prescribed positive integer and consider the following approximation of Problem (\ref{eq:f})-(\ref{eq:def_f})
\begin{equation}
\label{eq:general:constrained}
\min\limits_{\lambda} f_T(\lambda) = E(w_{T,\lambda}, \lambda),
\end{equation}
where $E$ is a smooth scalar function, and\footnote{In general, the algorithm used to minimize the inner
  objective may involve auxiliary variables, e.g., velocities when
  using
  gradient descent with momentum,
  so $w$ should be intended as a larger vector containing both model
  parameters and auxiliary variables.}
\begin{equation}
\label{eq:general:constrained2}
w_{0,\lambda} = \Phi_0(\lambda),~w_{t,\lambda} =  \Phi_t(w_{t-1,\lambda},\lambda),~t \in [T],
\end{equation}
with $\Phi_0:\RSet^m\to\RSet^d$ a smooth initialization mapping and,
for every $t \in [T]$,
$\Phi_t : \RSet^d \times \RSet^m \rightarrow \RSet^d$ a smooth mapping
that represents the operation performed by the $t$-th step of an
optimization algorithm. For example, the optimization dynamics could
be gradient descent%
:
$\Phi_t(w_t, \lambda) = w_t - \eta_t \nabla L_{\lambda}(\cdot)$ where
$(\eta_t)_{t \in [T]}$ 
is
a 
sequence of steps sizes.

The approximation of the bilevel problem \eqref{eq:f}-\eqref{eq:def_f} by the procedure
\eqref{eq:general:constrained}-\eqref{eq:general:constrained2} raises the issue of the quality of this
approximation and we return to this issue in the next section. However, it also suggests to consider the inner dynamics as a form of approximate empirical error minimization 
(e.g. early stopping) 
which is valid in its own right. 
From this perspective
-- conversely to the implicit differentiation strategy --
it is possible
to include among the components
of $\lambda$ variables which are associated with the optimization
algorithm itself. For example, 
$\lambda$ may include the step sizes or momentum factors if the
dynamics $\Phi_t$ in Eq. \eqref{eq:general:constrained2} is
gradient descent with momentum; 
in \citep{andrychowicz_learning_2016,wichrowska2017learnedICML} the mapping $\Phi_t$ is implemented as a recurrent neural network, while \citep{finn_model-agnostic_2017} focus on the initialization mapping by letting $\Phi_0(\lambda) = \lambda$.   

A major advantage of 
this reformulation
is that it
makes it possible to compute efficiently  
the gradient of $f_T$, which we call \emph{hypergradient},
either in time or in memory
\citep{maclaurin_gradient-based_2015, franceschi_forward_2017}, by
making use of reverse or forward mode algorithmic differentiation 
\citep{griewank2008evaluating,baydin2017automatic}. 
This allows us
 to optimize a number of hyperparameters of the same order of
that of parameters, a situation which arise in
ML.

\section{Exact and Approximate Bilevel Programming} \label{sec:analysis}

In this section, we provide results 
about the existence of solutions of Problem~\eqref{eq:f}-\eqref{eq:def_f}
and the approximation properties of 
Procedure~\eqref{eq:general:constrained}-\eqref{eq:general:constrained2} 
with respect to the original bilevel problem%
. Proofs of these results are provided in the supplementary material.

Procedure~\eqref{eq:general:constrained}-\eqref{eq:general:constrained2},
though related to the bilevel problem \eqref{eq:f}-\eqref{eq:def_f},
may not be, in general, a good approximation of it. 
Indeed, making the assumptions  (which sound perfectly reasonable) that, for every $\lambda\in\Lambda$, 
$w_{T,\lambda}\to w_\lambda$ for some $w_\lambda\in\arg\max L_{\lambda}$, 
and that $E(\cdot, \lambda)$
is continuous, one can only assert that
$\lim_{T\to\infty} f_T (\lambda) = E(w_\lambda,\lambda) \geq f(\lambda)$. 
This is because the optimization dynamics converge to some minimizer of the inner objective $L_\lambda$, but not necessarily to the one that also minimizes the function $E$. This is illustrated in Figure \ref{fig:no_singleton}. 
The situation is, however, different if the inner problem admits a unique minimizer for every $\lambda\in\Lambda$. Indeed in this case, it is possible to show that the set of minimizers of the approximate problems converge, as $T \to +\infty$ and in an appropriate sense, to the set of minimizers of the bilevel problem.
More precisely, we make the following assumptions:
\begin{figure}[t] 
\includegraphics[width=.55\textwidth]{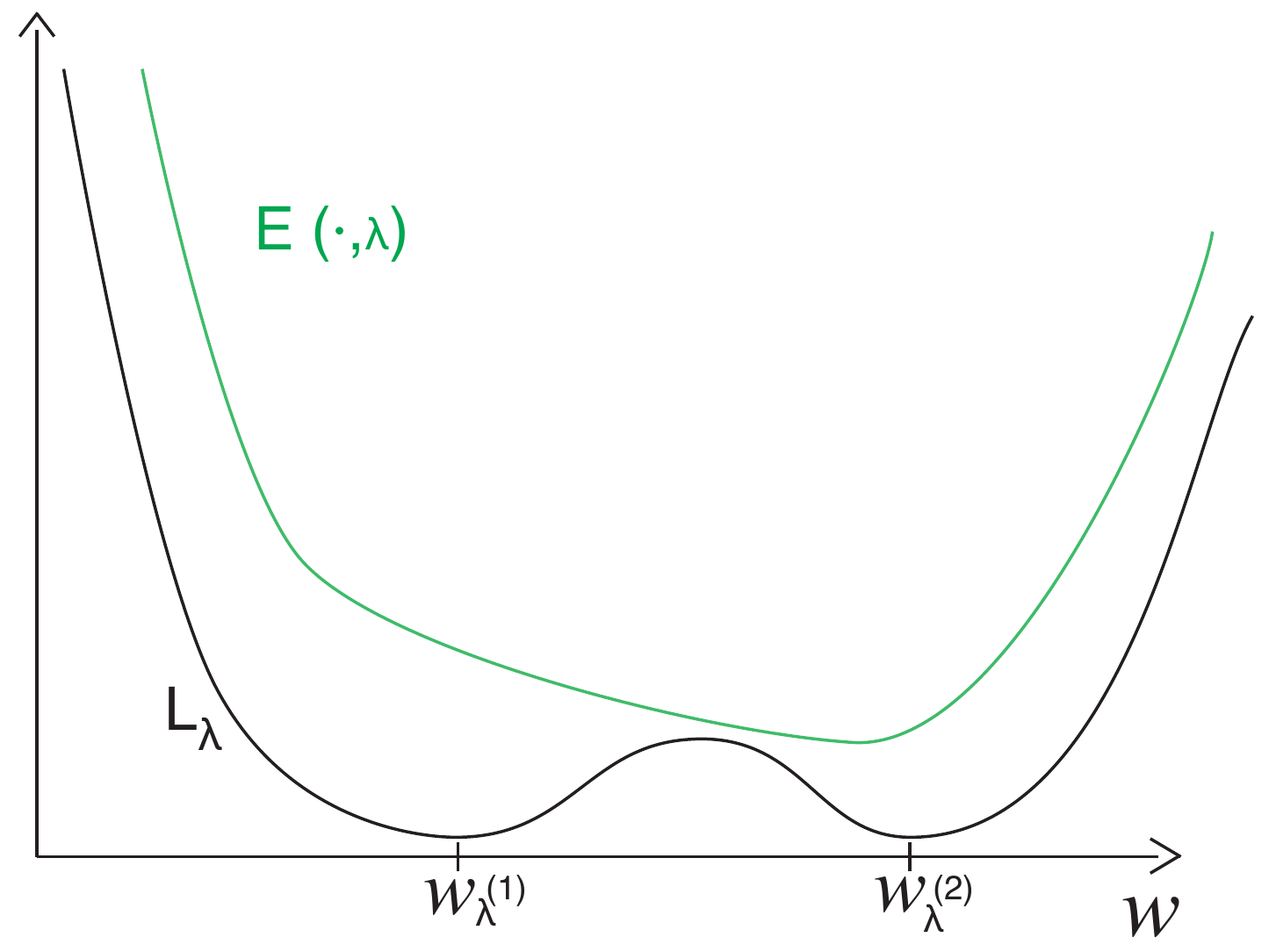}
\vspace{-5mm}
\caption{In this cartoon, for a fixed $\lambda$, $\mathrm{argmin} ~L_{\lambda}=\{w^{(1)}_{\lambda}, w^{(2)}_{\lambda} \}$; the iterates of an optimization mapping
$\Phi$ could converge to $w^{(1)}_{\lambda}$ with $E(w^{(1)}_{\lambda}, \lambda) > E(w^{(2)}_{\lambda}, \lambda)$.
} \label{fig:no_singleton}
\end{figure}
\vspace{-.20truecm}
\begin{enumerate}
\item[(i)] $\Lambda$ is a compact subset of $\RSet^m$;
\vspace{-.15truecm}
\item[(ii)] $E:\RSet^d\times\Lambda\to\RSet$ is jointly continuous;
\vspace{-.15truecm}
\item[(iii)] the map $(w,\lambda) \mapsto L_\lambda(w)$ is jointly continuous and such that $\arg\min L_{\lambda}$ is a singleton for every $\lambda\in\Lambda$;
\vspace{-.15truecm}
\item[(iv)] $w_\lambda=\arg\min L_{\lambda}$ remains bounded as $\lambda$ varies in $\Lambda$.
\end{enumerate}
\vspace{-.15truecm}
Then, problem~\eqref{eq:f}-\eqref{eq:def_f} becomes
\begin{equation}
\label{eq:mainprob}
\min_{\lambda \in \Lambda} f(\lambda) = E(w_\lambda, \lambda),
\qquad w_\lambda = \mathrm{argmin}_{u} L_\lambda(u).
\end{equation}
Under the above assumptions, in the following we give results about the existence of solutions of problem \eqref{eq:mainprob} and the (variational) convergence of the approximate problems
\eqref{eq:general:constrained}-\eqref{eq:general:constrained2}
 towards problem \eqref{eq:mainprob} --- relating the minima as well as the
 set of minimizers. In this respect we note that,
since both $f$ and $f_T$ are nonconvex, $\argmin f_T$ and $\argmin f$ 
are in general nonsingleton, so an appropriate definition of set 
convergence is required.

\begin{theorem}[Existence] \label{thm:existence}
Under Assumptions~{\rm (i)}-{\rm (iv)}
problem \eqref{eq:mainprob} admits solutions.
\end{theorem}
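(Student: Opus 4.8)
The plan is to apply the Weierstrass extreme value theorem. Since $\Lambda$ is compact by Assumption~(i), it suffices to show that $f:\lambda\mapsto E(w_\lambda,\lambda)$ is continuous on $\Lambda$ (in fact lower semicontinuity would already suffice). Because $E$ is jointly continuous by Assumption~(ii), and $f$ is the composition $\lambda\mapsto(w_\lambda,\lambda)\mapsto E(w_\lambda,\lambda)$, the whole argument reduces to proving that the map $\lambda\mapsto w_\lambda$ is continuous on $\Lambda$.

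To establish continuity of $\lambda\mapsto w_\lambda$ I would argue sequentially. Fix $\lambda\in\Lambda$ and let $\lambda_k\to\lambda$ with $\lambda_k\in\Lambda$. By Assumption~(iv) the sequence $(w_{\lambda_k})_k$ stays in a fixed bounded subset of $\RSet^d$, hence admits a subsequence $w_{\lambda_{k_j}}\to\bar w$. For every $u\in\RSet^d$ and every $j$, optimality gives $L_{\lambda_{k_j}}(w_{\lambda_{k_j}})\le L_{\lambda_{k_j}}(u)$; passing to the limit $j\to\infty$ and using the joint continuity of $(w,\lambda)\mapsto L_\lambda(w)$ from Assumption~(iii) yields $L_\lambda(\bar w)\le L_\lambda(u)$. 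As $u$ is arbitrary, $\bar w\in\arg\min L_\lambda$, which by Assumption~(iii) is the singleton $\{w_\lambda\}$, so $\bar w=w_\lambda$. Thus every convergent subsequence of the bounded sequence $(w_{\lambda_k})_k$ has the same limit $w_\lambda$, and therefore the whole sequence converges to $w_\lambda$. Hence $\lambda\mapsto w_\lambda$ is continuous, $f$ is continuous on the compact set $\Lambda$, and it attains its minimum, which proves the theorem.

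The only delicate point is this subsequence argument: Assumptions~(iii) and~(iv) are precisely what is needed to upgrade ``subsequential limits of $(w_{\lambda_k})_k$ are inner minimizers'' to ``the whole sequence converges''. In particular, the uniqueness of the inner minimizer is essential here — without it one could only conclude that subsequential limits belong to $\arg\min L_\lambda$ and the map $\lambda\mapsto w_\lambda$ (now set-valued) need not be single-valued nor continuous, as illustrated by Figure~\ref{fig:no_singleton}.
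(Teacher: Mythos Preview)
Your proof is correct and follows essentially the same route as the paper: reduce to continuity of $f$ via Weierstrass, then establish continuity of $\lambda\mapsto w_\lambda$ by a subsequence argument that passes the optimality inequality to the limit using joint continuity of $L$ and invokes uniqueness of the inner minimizer to pin down the cluster point. The paper's proof is essentially identical step for step, including the ``bounded sequence with a unique cluster point converges'' conclusion.
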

\vspace{-3mm}
\noindent{\textbf{Proof} See Appendix~A}.

The result below follows from 
general facts 
on the stability of minimizers in optimization problems \cite{dontchev93}. 

\begin{theorem}[Convergence]
\label{thm:main}
\label{p:20170207a}
In addition to Assumptions~{\rm (i)}-{\rm (iv)}, 
suppose that:
\vspace{-2ex}
\begin{enumerate}
\item[{\rm (v)}] $E(\cdot,\lambda)$ is uniformly Lipschitz continuous;
\vspace{-1ex}
\item[{\rm (vi)}] The iterates 
$(w_{T,\lambda})_{T \in \mathbb{N}}$ converge uniformly to $w_{\lambda}$ on $\Lambda$ as $T \to +\infty$.
\end{enumerate}
\vspace{-2ex}
Then 
\vspace{-.2truecm}
\begin{enumerate}
\item[{\rm (a)}] $\inf f_T \to \inf f$,
\vspace{-.1truecm}
\item[{\rm (b)}] $\argmin f_T \to \argmin f$, meaning that, for every 
$(\lambda_T)_{T \in \mathbb{N}}$ such that $\lambda_T \in \argmin f_T$, we have that:
\begin{itemize}
\item[-] $(\lambda_T)_{T \in \mathbb{N}}$ admits a convergent subsequence;
\item[-] for every subsequence $(\lambda_{K_T})_{T \in \mathbb{N}}$ such that $\lambda_{K_T} \to \bar{\lambda}$, we have $\bar{\lambda} \in \argmin f$.
\end{itemize}
\end{enumerate}
\end{theorem}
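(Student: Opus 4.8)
The plan is to reduce everything to the uniform convergence $f_T \to f$ on the compact set $\Lambda$, and then apply the classical stability of minima and minimizers under uniform convergence (the stability facts cited from \cite{dontchev93}). Concretely I would proceed in three steps: continuity of the inner-minimizer map, a uniform estimate on $|f_T - f|$, and the passage to infima and argmin sets.

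\emph{Step 1: $\lambda \mapsto w_\lambda$ is continuous.} Fix $\lambda \in \Lambda$ and a sequence $\lambda_n \to \lambda$. By Assumption~(iv) the points $w_{\lambda_n}$ lie in a fixed bounded set, so it suffices to show that every convergent subsequence $w_{\lambda_{n_k}} \to \bar w$ has limit $\bar w = w_\lambda$. From $L_{\lambda_{n_k}}(w_{\lambda_{n_k}}) \le L_{\lambda_{n_k}}(u)$ for all $u \in \R^d$ and the joint continuity of $(w,\lambda)\mapsto L_\lambda(w)$ (Assumption~(iii)), letting $k\to\infty$ gives $L_\lambda(\bar w) \le L_\lambda(u)$ for every $u$, hence $\bar w \in \arg\min L_\lambda = \{w_\lambda\}$. (Equivalently: restricting the inner minimization to a large compact ball containing all the $w_\lambda$, Berge's maximum theorem gives outer semicontinuity of the argmin correspondence, which together with single-valuedness yields continuity.) Composing with the joint continuity of $E$, the function $f(\lambda) = E(w_\lambda,\lambda)$ is continuous on the compact set $\Lambda$, which in particular re-establishes Theorem~\ref{thm:existence}.

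\emph{Step 2: uniform convergence.} For every $\lambda\in\Lambda$ we have $f_T(\lambda) - f(\lambda) = E(w_{T,\lambda},\lambda) - E(w_\lambda,\lambda)$, so by the uniform Lipschitz continuity of $E(\cdot,\lambda)$ (Assumption~(v)) with constant $\nu$, $|f_T(\lambda) - f(\lambda)| \le \nu\,\|w_{T,\lambda} - w_\lambda\|$. Taking the supremum over $\lambda$ and invoking Assumption~(vi), $\sup_{\lambda\in\Lambda}|f_T(\lambda) - f(\lambda)| \le \nu\sup_{\lambda\in\Lambda}\|w_{T,\lambda}-w_\lambda\| \to 0$. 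In particular each $f_T$ is continuous and attains its infimum on $\Lambda$.

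\emph{Step 3: conclusions, and the main obstacle.} Part~(a) is immediate: $|\inf f_T - \inf f| \le \sup_{\lambda}|f_T(\lambda)-f(\lambda)| \to 0$. For part~(b), let $\lambda_T \in \arg\min f_T$; by compactness of $\Lambda$ some subsequence $\lambda_{K_T}\to\bar\lambda\in\Lambda$, and then $|f(\bar\lambda)-\inf f| \le |f(\bar\lambda)-f(\lambda_{K_T})| + |f(\lambda_{K_T})-f_{K_T}(\lambda_{K_T})| + |\inf f_{K_T}-\inf f|$, where the three terms vanish by continuity of $f$, uniform convergence, and part~(a) respectively; hence $f(\bar\lambda)=\inf f$, i.e. $\bar\lambda\in\arg\min f$. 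I expect the only delicate point to be Step~1: one genuinely needs both the singleton Assumption~(iii) and the boundedness Assumption~(iv) — without boundedness the subsequential argument fails — and it is the \emph{joint} (not merely separate) continuity of $L$ that makes the limit in the optimality inequality legitimate. Once $f$ is known to be continuous and $f_T\to f$ uniformly on $\Lambda$, the rest is the standard compactness argument.
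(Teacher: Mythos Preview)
Your proof is correct and follows the same route as the paper: Step~2 is exactly the paper's argument (the uniform Lipschitz bound turning~(vi) into uniform convergence of $f_T$ to $f$), Step~3 reproduces the general stability lemma that the paper states and proves separately (your direct estimate $|\inf f_T - \inf f|\le\sup_\lambda|f_T-f|$ is in fact cleaner than the paper's contradiction argument for~(a)), and Step~1 recapitulates the paper's proof of Theorem~\ref{thm:existence}. One small slip: the clause ``in particular each $f_T$ is continuous'' does not follow from uniform convergence to a continuous limit --- it follows instead from the assumed smoothness of the maps $\Phi_t$ together with the joint continuity of $E$.
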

\vspace{-3mm}
\noindent{\textbf{Proof} See Appendix~A}.

We stress that assumptions (i)-(vi) are very natural and satisfied by many problems of practical interests.
Thus, the above results provide full theoretical justification to the proposed approximate procedure \eqref{eq:general:constrained}-\eqref{eq:general:constrained2}.
The following remark discusses assumption (vi), while
the subsequent example will be 
relevant to the experiments in Sec.~\ref{sec:ex}.

\begin{remark}
\label{rmk:050618a}
If $L_\lambda$ is strongly convex, then many gradient-based algorithms 
(e.g., standard and accelerated gradient descent) yield linear convergence of the iterates $w_{T,\lambda}$'s. Moreover, in such cases, the rate of linear convergence is of type $(\nu_\lambda - \mu_\lambda)/(\nu_\lambda + \mu_\lambda)$, where $\nu_\lambda$
and $\mu_\lambda$ are the Lipschitz constant of the gradient and the modulus of strong convexity of $L_\lambda$ respectively. 
So, this rate can be uniformly bounded from above by $\rho \in \left]0,1\right[$, provided that $\sup_{\lambda \in \Lambda}\nu_\lambda<+\infty$ and $\inf_{\lambda\in \Lambda}\mu_\lambda>0$.
Thus, in these cases $w_{T,\lambda}$ converges uniformly to $w_\lambda$
on $\Lambda$ (at a linear rate). 
\end{remark}

\begin{example}
\label{ex:linear}
Let us consider the following form of the inner objective:
\begin{equation}
L_H(w) = \lVert y - X H w\rVert^2 + \rho \lVert w \rVert^2, 
\end{equation}
where $\rho>0$ is a fixed regularization parameter and $H \in \R^{d \times d}$
is the hyperparameter, representing a linear feature map.  $L_H$
is strongly convex, with modulus $\mu = \rho>0$ (independent on the hyperparameter $H$),
and Lipschitz smooth with constant $\nu_H = 2 \lVert (X H)^\top X H + \rho I \rVert$,
which is bounded from above, if $H$ ranges in a bounded set of square matrices. 
In this case assumptions (i)-(vi) are satisfied.
\end{example}

\section{Learning Hyper-Representations}
\label{sec:hyper}

In this section, we instantiate the bilevel programming approach for
ML outlined in Sec. \ref{sec:l2l} in the case of deep learning where 
representation layers are shared across episodes. 
Finding good data representations is a centerpiece in machine learning.
Classical approaches~\cite{baxter1995learning,caruana_multitask_1998} 
learn both the weights of the representation mapping and those of the ground classifiers jointly on the same data. Here we follow the bilevel approach and split each dataset/episode in training and validation sets. 

\algsetup{indent=2em}
\begin{algorithm}[t]
  \caption{Reverse-HG for Hyper-representation}
  \label{alg:ho-reverse}
  \begin{algorithmic}
    \STATE {\bfseries Input:} $\lambda$, current values of the hyperparameter, $T$ number of iteration of GD, $\eta$ ground learning rate, $\mathcal{B}$ mini-batch of episodes from $\mathcal{D}$  
    \STATE {\bfseries Output:} Gradient of meta-training error w.r.t. $\lambda$ on $\mathcal{B}$
    \FOR {$j=1$ {\bfseries to} $|\mathcal{B}|$}
      \STATE $w^j_0 = 0$
      \FOR{$t=1$ {\bfseries to} $T$}
      \STATE $w^j_t\gets w_{t-1} - \eta \nabla_w L^j(w^j_{t-1}, \lambda, D^j_{\operatorname{tr}})$
      \ENDFOR
      \STATE $\alpha^j_T \gets \nabla_w L^j(w^j_T, \lambda, D_{\operatorname{val}})$
      \STATE $p^j \gets \nabla_{\lambda} L^j(w^j_T, \lambda, D_{\operatorname{val}})$
      \FOR {$t=T-1$ {\bfseries downto} $0$}
      \STATE       $p^j \gets p^j - \alpha^j_{t+1} \eta \nabla_{\lambda} \nabla_w L^j(w^j_{t}, \lambda, D^j_{\operatorname{tr}})$  %
      \STATE $\alpha^j_t \gets  \alpha^j_{t+1}
       \left[I - \eta \nabla_w \nabla_w L^j(w^j_{t}, \lambda, D^j_{\operatorname{tr}}) \right]$
      \ENDFOR
    \ENDFOR
    \STATE {\bf return} $\sum_j p^j$
  \end{algorithmic}
\end{algorithm}

Our method involves the learning of a cross-task intermediate
representation $h_\lambda:\mathcal{X}\to\mathbb{R}^k$ (parametrized by
a vector $\lambda$) on top of which task specific models
$g^j:\mathbb{R}^k\to\mathcal{Y}^j$ (parametrized by vectors $w^j$) are trained.
The final ground model for task $j$ is thus given by $g^j\circ h$.
To find $\lambda$, we solve Problem \eqref{eq:f}-\eqref{eq:def_f} 
with inner and outer objectives as in Eqs. 
\eqref{eq:mlL} and \eqref{eq:mlE}, respectively.
Since, in general, this problem cannot be solved exactly, we instantiate the approximation scheme in Eqs. \eqref{eq:general:constrained}-\eqref{eq:general:constrained2} as follows: 
\vspace{-2mm}
\begin{equation}
\label{eq:hr:outer}
\min\limits_{\lambda} f_T(\lambda) = \sum_{j=1}^N L^j(w^j_T, \lambda, D^j_{\operatorname{val}})
\end{equation}
\vspace{-2mm}
\begin{equation}
\label{eq:hr:dyn}
\hspace{-.2truecm}w^j_{t} =  w^j_{t-1} {-} \eta \nabla_w L^j(w^j_{t-1}, \lambda, D^j_{\operatorname{tr}})
,~t, j \in [T],\hspace{.1truecm}[N].
\end{equation}
Starting from an initial value, the weights of the task-specific models are learned by $T$ iterations of gradient descent.
The gradient of $f_T$ can be computed efficiently in time by making use of an extended 
reverse-hypergradient procedure~\citep{franceschi_forward_2017} which we present in 
Algorithm~\ref{alg:ho-reverse}. Since, in general, the number of episodes in a 
meta-training set is large, we compute a stochastic approximation of the gradient of $f_T$ by sampling a mini-batch of episodes.  
At test time, given a new episode $\bar{D}$, the representation $h$ is kept fixed,
and all the examples in  $\bar{D}$ are used to tune the weights $\bar{w}$ of the episode-specific model $\bar{g}$.

%
Like other initialization and optimization strategies for ML,
our method does not require lookups in a support set
as the memorization and metric
strategies do~\cite{santoro_meta-learning_2016,%
  vinyals_matching_2016, mishra2018ASimpleICLR}.  
Unlike~\cite{andrychowicz_learning_2016,Sachin2017} we do not tune the optimization algorithm, which in our case is plain empirical loss minimization by gradient descent, and rather focus on the hypothesis space.
Unlike~\cite{finn_model-agnostic_2017}, that aims at maximizing sensitivity
of new task losses to the model parameters, we aim at maximizing the generalization 
to novel examples during training episodes, with respect to $\lambda$. 
Our assumptions about the structure of the model are slightly stronger
than in~\cite{finn_model-agnostic_2017} but still mild, namely that some
(hyper)parameters define the
representation and the remaining parameters define the classification function.
In \citep{munkhdalai2017metaICML} the meta-knowledge is distributed among fast and slow weights and an external memory; our approach is more direct, since the meta-knowledge is solely distilled by $\lambda$. A further advantage of our method is that, if the episode-specific models are linear
(e.g. logistic regressors) and each loss $L^j$ is strongly convex in $w$, the theoretical
guarantees of Theorem~\ref{thm:main} apply (see Remark~\ref{rmk:050618a}). 
These assumptions are satisfied in 
the experiments reported in the next section.

\section{Experiments} \label{sec:ex}

The aim of the following experiments is threefold. First, we investigate the impact of the number of iterations of the optimization dynamics on the quality of the solution on a simple multiclass classification problem. Second, we test our hyper-representation method in the context of few-shot learning on two benchmark datasets. Finally, we constrast the bilevel ML approach against classical approaches to learn shared representations 
%
\footnote{The code for reproducing the experiments, based on the package \textsc{Far-HO} (\url{https://bit.ly/far-ho}), is available at \url{https://bit.ly/hyper-repr}}.

\subsection{The Effect of $T$}  \label{sec:ex:eT}

Motivated by the theoretical findings of Sec.~\ref{sec:analysis}, 
we empirically investigate how solving the inner problem \textit{approximately}
(i.e. using small $T$) affects convergence, generalization performances,
and running time. We focus in particular on the linear feature map
described in Example~\ref{ex:linear}, which allows us to compare the approximated solution
against the closed-form analytical solution given by
$$w_H = [(XH)^T XH + \rho I]^{-1}(XH)^T Y.$$
In this setting, the bilevel problem 
reduces to a (non-convex) 
%
%
optimization problem in $H$. 

%
%
We use a subset of 100 classes extracted from Omniglot dataset~\citep{lake_building_2017} to 
construct a HO problem aimed at tuning $H$.
A training set $D_{\operatorname{tr}}$ and a validation set $D_{\operatorname{val}}$,
each consisting of three randomly drawn examples per class, were sampled to form the HO problem. 
A third set $D_{\operatorname{test}}$, consisting of fifteen examples per class, was used for testing. 
Instead of using raw images as input, we employ feature vectors $x\in\mathbb{R}^{256}$ 
computed by the convolutional network trained on one-shot five-ways ML setting
as described in Sec.~\ref{sec:ex:hr}.


%
\begin{figure}[t] 
\includegraphics[width=0.9\textwidth]{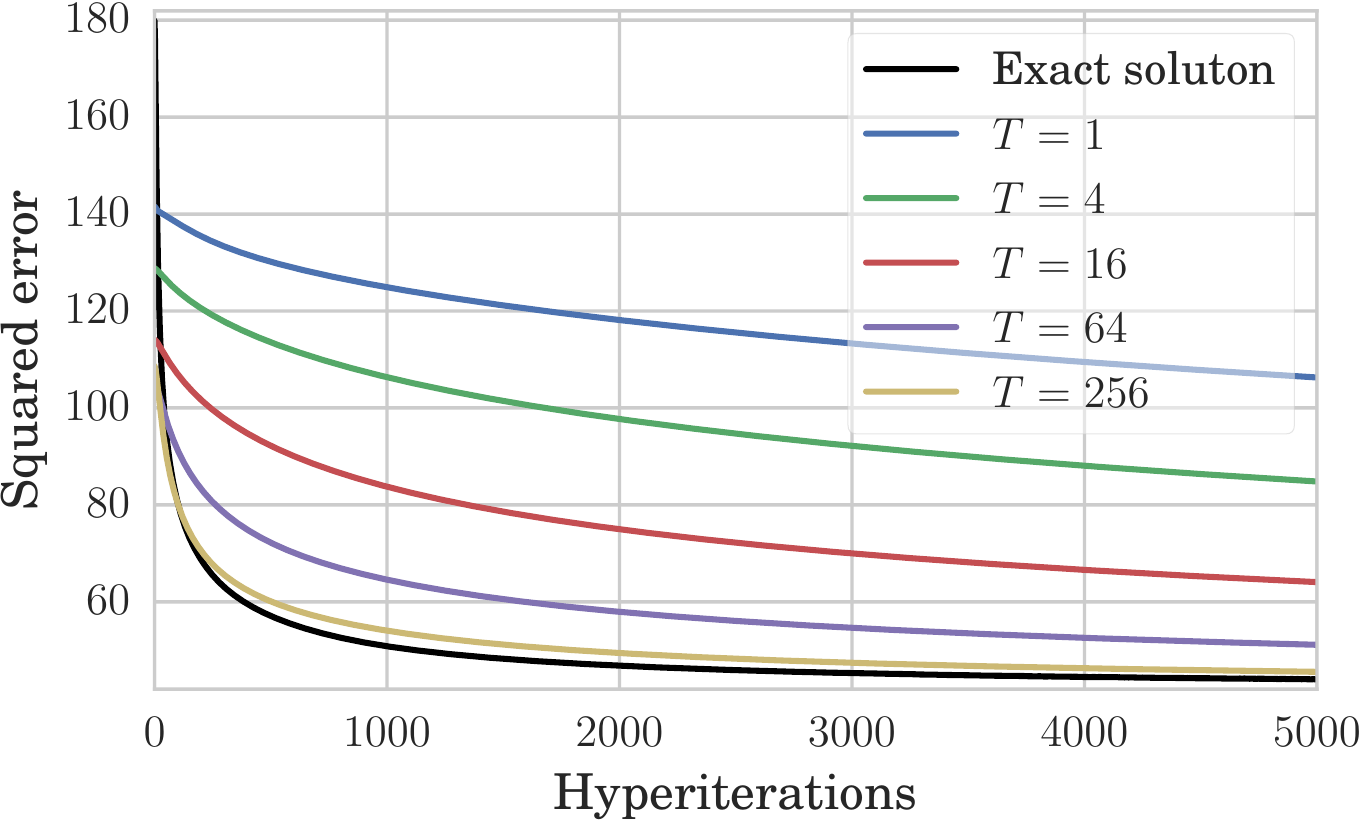}
\vspace{-5mm}
\caption{
Optimization of the outer objectives $f$ and $f_T$ for exact and approximate problems. The optimization of $H$ is performed with gradient descent with momentum, with same initialization, step size and momentum factor for each run.
}
\label{fig:eT:optimization}
\end{figure}
For the approximate problems we compute the hypergradient 
using Algorithm 1, where it is intended that $\mathcal{B}=\{(D_{\operatorname{tr}}, D_{\operatorname{val}})\}$. Figure~\ref{fig:eT:optimization} shows the values of functions $f$ and $f_T$ (see Eqs. \eqref{eq:f} and \eqref{eq:general:constrained}, respectively) 
during the optimization of $H$.
As $T$ increases, the solution of the approximate problem approaches the true bilevel solution.
%
%
%
However, performing a small number of gradient descent steps for solving the inner problem acts as implicit regularizer.
As it is evident from Figure \ref{fig:eT:generalization}, 
the generalization error is better when $T$ is smaller than the value yielding 
the best approximation of the inner solution.
%
\begin{figure}[t] 
\includegraphics[width=0.9\textwidth]{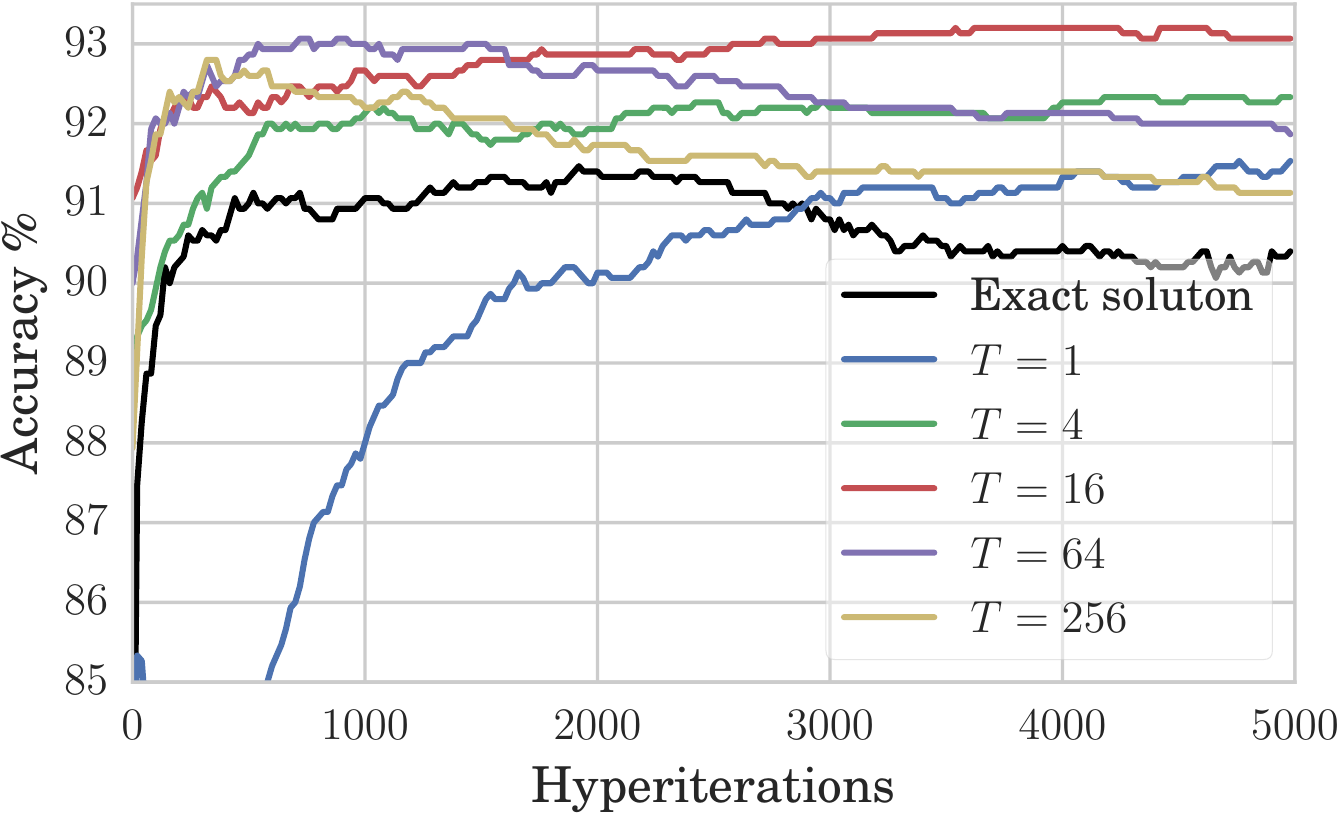}
\vspace{-5mm}
\caption{
Accuracy on $D_{\operatorname{test}}$ of exact and approximated solutions during optimization of $H$. Training and validation accuracies reach almost 100\% already for $T=4$ and after few hundred hyperiterations, and therefore are not reported.
}
\label{fig:eT:generalization}
\end{figure}
%
\begin{table}[t]
\vspace{-1mm}
\caption{Execution times on a NVidia Tesla M40 GPU.}
\label{tab:eT:exetime}
\begin{tabular}{r|ccccc|c}
	 $T$ & 1 & 4 & 16 & 64 & 256 & Exact \\
     \hline 
	 Time (sec) & 60 & 119 & 356 & 1344 & 5532 & 320 \\
\end{tabular}
\vspace{-6mm}
\end{table}
This is to be expected since, in this setting, the dimensions of parameters and hyperparameters are of the same order, leading to a concrete possibility of overfitting the outer objective (validation error).
An appropriate, problem dependent, choice of $T$  may help avoiding this issue (see also Appendix C).
As $T$ increases, the number of hyperiterations required to reach the maximum test accuracy decreases, 
further suggesting that there is an interplay between the number of iterations used to solve the inner and the outer objective. 
%
%
Finally, the running time of Algorithm \ref{alg:ho-reverse}, is 
linear in $T$ and the size of $w$ and independent of the size of $H$
(see also Table~\ref{tab:eT:exetime}), 
making it even more appealing to reduce the number of iterations. 

\subsection{Few-shot Learning}
\label{sec:ex:hr}

We new turn our attention to learning-to-learn, precisely to few-shot supervised learning, implementing the ML strategy outlined in Sec. \ref{sec:hyper} on two different benchmark datasets:
 
 \noindent $\bullet$ \textsc{Omniglot} \citep{lake2015human},
 a dataset that contains examples of 1623 different handwritten characters
 from 50 alphabets. We downsample the images to $28\times 28$. 
 
 \noindent $\bullet$ \textsc{MiniImagenet} \citep{vinyals_matching_2016},
a subset of ImageNet \citep{deng2009imagenet}, that contains 60000 downsampled images from 100 different classes.

Following the experimental protocol used in a number of recent works,  
%
%
we build a 
meta-training set
$\mathcal{D}$, from which we sample datasets to solve Problem \eqref{eq:hr:outer}-\eqref{eq:hr:dyn}, a meta-validation set 
$\mathcal{V}$ for tuning ML hyperparameters, and finally a meta-test set $\mathcal{T}$ which is used to estimate accuracy. Operationally, each meta-dataset consists of a pool of samples belonging to different 
(non-overlapping between separate meta-dataset)
classes, which can be combined to form ground classification datasets $D^j=D^j_{\operatorname{tr}} \cup D^j_{\operatorname{val}}$ with 5 or 20 classes (for Omniglot). 
The $D^j_{\operatorname{tr}}$'s contain 1 or 5 examples per class
which are used to fit $w^j$ (see Eq.~\ref{eq:hr:dyn}).
The $D^j_{\operatorname{val}}$'s, containing 15 examples per class, is used either
to compute $f_T(\lambda)$ (see Eq.~\eqref{eq:hr:outer}) and its (stochastic)
gradient if $D^j\in \mathcal{D}$ or to provide a generalization score if $D^j$
comes from either $\mathcal{V}$ or $\mathcal{T}$. 
For MiniImagenet we use the same split and images proposed in \citep{Sachin2017}, 
while for Omniglot we use the protocol defined by~\cite{santoro_meta-learning_2016}.

As ground classifiers we use multinomial logistic regressors and as task losses $\ell^j$ we employ cross-entropy.
The inner problems, being strongly convex, admit unique minimizers, yet require numerical computation of the solutions.
We initialize ground models parameters $w^j$ to $0$ and, according to the observation in Sec. \ref{sec:ex:eT}, we perform 
$T$ gradient descent steps, where $T$ is treated as a ML hyperparameter that has to be validated. 
Figure \ref{fig:metavalT} shows an example of meta-validation of $T$ for one-shot learning on MiniImagenet.
%
We compute a stochastic approximation of $\nabla f_T(\lambda)$ with Algorithm~\ref{alg:ho-reverse} 
and use Adam with decaying learning rate to optimize 
$\lambda$.

Regarding the specific implementation of the representation mapping $h$, we employ for Omniglot a four-layers convolutional neural network with strided convolutions and 64 filters per layer as in \citep{vinyals_matching_2016} and other successive works. For MiniImagenet we tried two different architectures:  

\noindent $\bullet$ \textit{C4L}, a four-layers convolutional neural network with max-pooling and 32 filters per layer; 

\noindent $\bullet$ \textit{RN}: a residual network \cite{he2016deep} 
built of four residual blocks followed by two convolutional layers.

The first network architecture has been proposed in \citep{Sachin2017} and then used 
in \citep{finn_model-agnostic_2017},
while a similar residual network architecture has been employed in a more recent
work~\citep{mishra2018ASimpleICLR}.
Further details on the architectures of $h$, as well as other ML hyperparameters, are specified in the supplementary material.
We report our results, using \emph{RN} for MiniImagenet, in Table~\ref{tab:results}, alongside scores from various recently proposed methods for comparison.

\begin{table*}[t]

\vspace{-2mm}
  \caption{Accuracy scores, computed on episodes from 
  $\mathcal{T}$, of various methods on 1-shot and 5-shot classification problems on Omniglot and MiniImagenet. For MiniImagenet 95\% confidence intervals are reported. 
  For Hyper-representation the scores are computed over 600 randomly drawn episodes. For other methods we show results as reported by their respective authors.\label{tab:results}}
\begin{small}
\begin{center}
\begin{tabular}{lcccccc}
      \toprule
  & \multicolumn{2}{c}{\textsc{Omniglot} 5 classes} & \multicolumn{2}{c}{\textsc{Omniglot} 20 classes}
  & \multicolumn{2}{c}{\textsc{MiniImagenet} 5 classes} \\
       Method & 1-shot & 5-shot & 1-shot & 5-shot & 1-shot & 5-shot 
      \\ \midrule
      \emph{Siamese nets} \citep{koch_siamese_2015}           & $97.3$ & $98.4$  & $88.2$ & $97.0$ & $-$ & $-$    \\
      \emph{Matching nets} \citep{vinyals_matching_2016} & $98.1$ & $98.9$  & $93.8$ & $98.5$  & $43.44 \pm 0.77$ & $55.31 \pm 0.73$ \\
       \emph{Neural stat.} \citep{edwards_towards_2016}  & $98.1$ & $99.5$  & $93.2$ & $98.1$ & $-$ & $-$  \\
      \emph{Memory mod.} \citep{Kaiser2017LearningICLR} & $98.4$ & $99.6$  & $95.0$ & $98.6$ & $-$ & $-$ \\
      \emph{Meta-LSTM} \citep{Sachin2017}  & $-$ & $-$ & $-$ & $-$ &  $43.56 \pm 0.84$ & $60.60 \pm 0.71$ \\
      \emph{MAML} \citep{finn_model-agnostic_2017} & $98.7$ & $99.9$  & $95.8$ & $98.9$ & $48.70\pm 1.75$ & $63.11 \pm 0.92 $   \\
      \emph{Meta-networks}
      \citep{munkhdalai2017metaICML}
      & $98.9$ &  $-$  & $97.0$ &  $-$  & $49.21\pm 0.96$ & $-$\\
      \emph{Prototypical Net.} \citep{snell_prototypical_2017} & $98.8$ &  $99.7$  & $96.0$ &  $98.9$  & $49.42\pm 0.78$ & $68.20\pm 0.66$ \\
      \emph{SNAIL} \citep{mishra2018ASimpleICLR}  & $99.1$ & $99.8$   & $97.6$ & $99.4$ & $55.71\pm 0.99$ & $68.88\pm0.92$ \\
      \hline
      \textbf{\textit{Hyper-representation}} & 98.6 & 99.5 &  95.5 &  98.4 & $50.54 \pm 0.85 $ &  $64.53 \pm 0.68$\\ 
\bottomrule
    \end{tabular}
\end{center}
    \end{small}
\vspace{-6mm}
\end{table*}
%
%
%
%
%
The proposed method achieves competitive results
highlighting the relative importance of learning a task independent representation, on the top of which 
logistic classifiers trained with very few samples generalize well. 
Moreover, utilizing more expressive models such as residual network as representation mappings, 
is beneficial for our proposed strategy and, unlike other methods, does not result in overfitting of the outer objective, as reported in \citep{mishra2018ASimpleICLR}.
Indeed, compared to \emph{C4L}, \emph{RN} achieves a relative improvement of 6.5\% on one-shot and 4.2\% on five-shot.
 Figure \ref{fig:repr} provides a visual 
example  
of the goodness of the learned representation, showing that MiniImagenet examples
(the first from meta-training, the second from the meta-testing sets) from similar classes (different dog breeds) are mapped near each other by $h$ and, conversely, 
samples from dissimilar classes are mapped afar.
\begin{figure}[h] 
\includegraphics[width=0.98\textwidth,height=2.2cm]{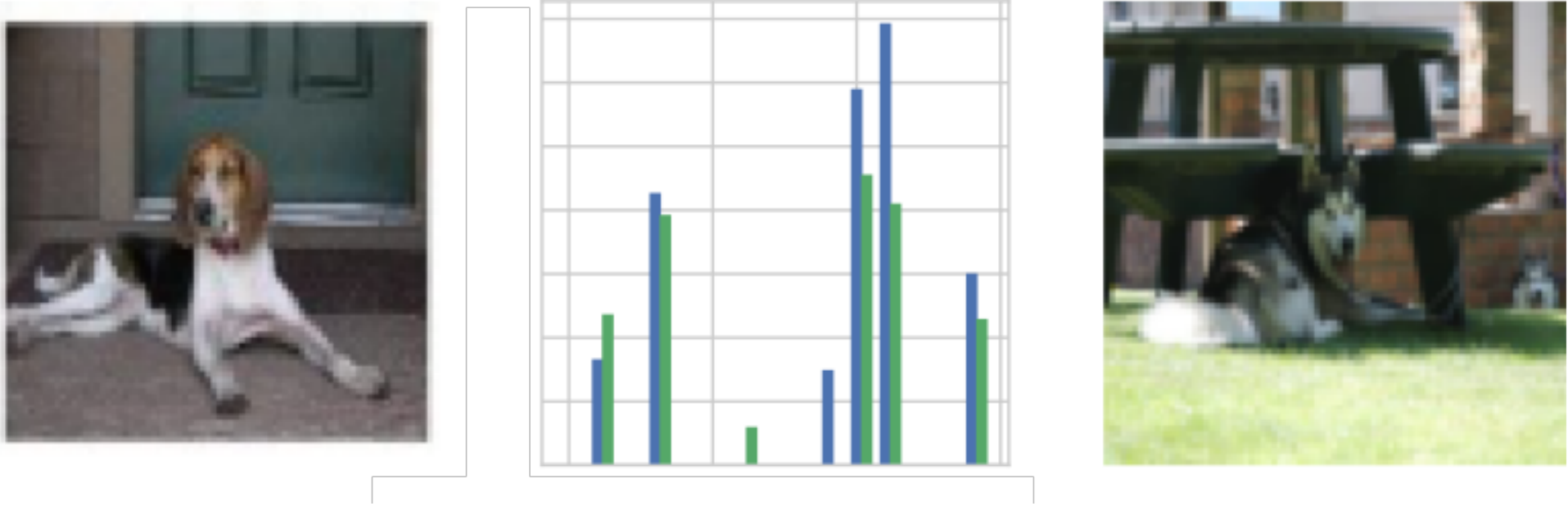}
\includegraphics[width=0.98\textwidth,height=2.2cm]{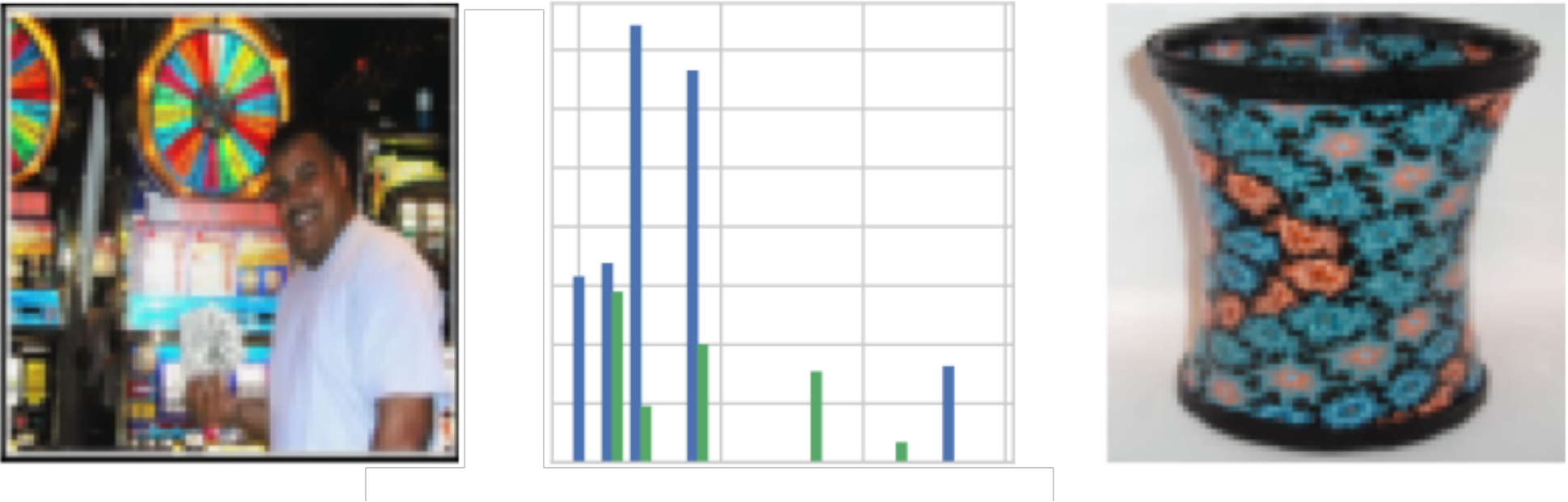}
\vspace{-5mm}
\caption{
\small{After sampling two datasets $D \in\mathcal{D}$ and $D'\in\mathcal{T}$, we show on the top the two images $x\in D,\; x'\in D'$  that minimize $||h_{\lambda}(x) - h_{\lambda}(x')||$ and on the bottom those that maximize it. In between each of the two couples we compare a random subset of components of $h_{\lambda(x)}$ (blue) and $h_{\lambda}(x')$ (green).} \label{fig:repr}}
\end{figure}
%
%
\begin{figure}[h] 
\vspace{-8mm}
\includegraphics[width=1.\textwidth]{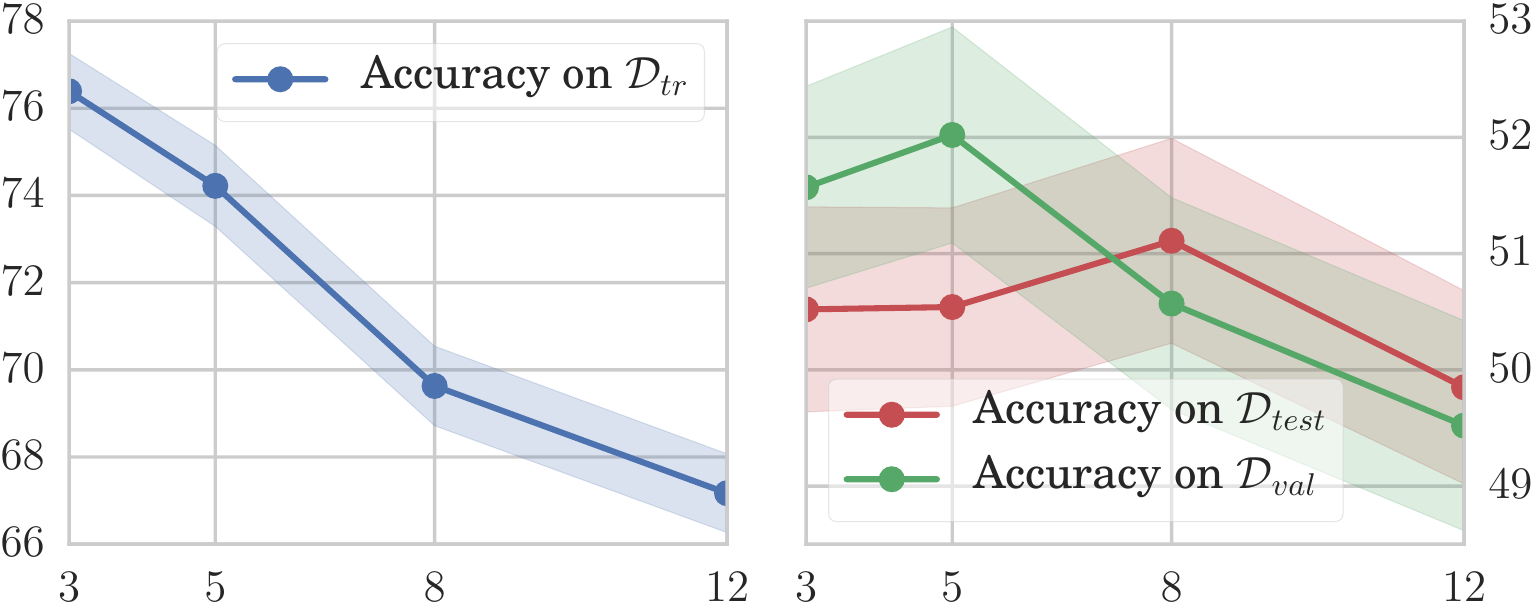}\caption{
Meta-validation of the number of gradient descent steps ($T$) of the ground models for MiniImagenet using the \emph{RN} representation. Early stopping on the accuracy on meta-validation set during meta-training resulted in halting the optimization of $\lambda$ after 
42k, 
40k, 
22k, 
and 15k
hyperiterations for $T$ equal to 3, 5, 8 and 12 respectively; in line with our observation in Sec. \ref{sec:ex:eT}.  
\label{fig:metavalT}
}
\vspace{-6mm}
\end{figure}
%

\vspace{-5mm}
\subsection{On Variants of Representation Learning Methods} \label{sec:ex:abrepr}

In this section, we show the benefits of learning a representation within the proposed bilevel framework 
compared to other possible approaches that involve
an explicit factorization of a classifier as $g^j\circ h$. The representation mapping $h$ is either pretrained 
or learned with different meta-learning algorithms. We focus on the problem of one-shot learning on MiniImagenet and we use \emph{C4L} as architecture for the representation mapping.
In all the experiments the ground models $g^j$ are multinomial logistic regressor as in Sec. \ref{sec:ex:hr}, tuned with 5 steps of gradient descent.
We ran the following experiments:

\noindent $\bullet$ \emph{Multiclass}:
the mapping $h:\mathcal{X}\to \mathbb{R}^{64}$ is given by the linear outputs before the softmax operation of a network\footnote{The network is similar to C4L but has 64 filters per layer.} 
pretrained on the totality of examples contained in the training meta-dataset
(600 examples for each of the 64 classes). In this setting, we found that using the second last layer or the output after the softmax yields worst results;

\noindent $\bullet$ \emph{Bilevel-train}: we use a bilevel approach but, unlike in Sec. \ref{sec:hyper}, we optimize the parameter vector $\lambda$ of the representation mapping by minimizing the loss on the training sets of each episode. The hypergradient is still computed with Algorithm \ref{alg:ho-reverse}, albeit we set $D^j_{\operatorname{val}}=D^j_{\operatorname{tr}}$ for each training episodes; 

\noindent $\bullet$ \emph{Approx} and \emph{Approx-train}: we consider an approximation of the hypergradient $\nabla f_T(\lambda)$ by  disregarding the optimization dynamics of the inner objectives (i.e. we set $\nabla_{\lambda} w^j_{T} = 0$). 
In \emph{Approx-train}  we just use the training sets;

\noindent $\bullet$ \emph{Classic}: as in \citep{baxter1995learning}, we learn $h$ by jointly optimize
$\hat{f}(\lambda, w^1, \dots, w^N) = 
    \sum_{j=1}^{N} L^j(w^j, \lambda, D^j_{\operatorname{tr}})$
 and treat the problem as standard multitask learning, with the exception that we evaluate $\hat{f}$ on mini-batches of 4 episodes, randomly sampled every 5 gradient descent iterations.

 In settings where we do not use the validation sets, we let the training sets of each episode contain 16 examples per class. Using training episodes with just one example per class resulted in performances just above random chance.
 While the first experiment constitutes a standard baseline, the others have the specific aim of assessing ($i$) the importance of splitting episodes of meta-training set into training and validation and ($ii$) the importance of computing the hypergradient of the approximate bilevel problem with Algorithm \ref{alg:ho-reverse}. 
The results reported in Table \ref{tab:variants} suggest that both the training/validation splitting and the full computation of the hypergradient constitute key factors for learning a good representation in a meta-learning context.
 \begin{table}[h]
 \vspace{-1mm}
\caption{
\small{Performance of various methods where the representation is either transfered or learned with variants of hyper-representation methods. The last raw reports, for comparison, the score obtained with hyper-representation}.} \label{tab:variants}
\begin{small}
  \begin{center}
    \begin{tabular}{lcc}
      \toprule
      Method & \# filters & Accuracy 1-shot \\
      \midrule
      \emph{Multiclass} &64 & $43.02$ \\ 
      \emph{Bilevel-train} &32 & $29.63$				\\
      \emph{Approx} 	& 	32 &	$41.12$		\\
      \emph{Approx-train} & 32& 		$38.80$			\\
      \emph{Classic-train} & 32 	&  $40.46$ 
      \\ \hline
      \emph{Hyper-representation-C4L} &32 & $47.51$ \\
      \bottomrule
    \end{tabular}
  \end{center}
\end{small}
\vspace{-9mm} 
\end{table}
 On the other side, using pretrained representations, especially in a low-dimensional space, turns out to be a rather effective baseline. One possible explanation is that, in this context, some classes in the training and testing meta-datasets are rather similar (e.g. various dog breeds) and thus ground classifiers can leverage on very specific representations.
 %

%
 

\vspace{-4mm}
\section{Conclusions}
\vspace{-1mm}

We have shown that both HO and ML can be formulated in terms of bilevel
programming and solved with an iterative approach.
When the inner problem has a unique solution (e.g. is strongly convex), our theoretical results show that
the iterative approach has convergence guarantees, 
a result that is interesting in its own right.
In the case of ML, by adapting classical strategies \citep{baxter1995learning} 
to the bilevel framework with training/validation splitting, we present a 
method for learning hyper-representations which 
is experimentally effective and supported by our theoretical guarantees.

Our framework encompasses recently proposed methods for meta-learning, such as learning to optimize, but also suggests 
different design patterns for the inner learning algorithm which could be interesting to explore in future work.
The resulting inner problems 
may
not satisfy the assumptions of our convergence analysis, raising the need for further theoretical
investigations. 
An additional 
future 
direction of research is the study of
the statistical properties of bilevel strategies where outer objectives
are based on the generalization ability of the inner model to new 
(validation) 
data. Ideas from \citep{maurer2016benefit,denvei2018incremental} may be useful in this direction.

\bibliographystyle{apalike}
\bibliography{all_refs}


\newpage


\appendix


\section{Proofs of the Results in Sec.~\ref{sec:analysis}}

{\em Proof of Theorem~\ref{thm:existence}.}
Since $\Lambda$ is compact, it follows from Weierstrass theorem
that a sufficient condition for the existence of minimizers is that $f$
is continuous. Thus,
let $\bar{\lambda} \in \Lambda$ and let $(\lambda_n)_{n \in \mathbb{N}}$
be a sequence in $\Lambda$ such that $\lambda_n \to \bar{\lambda}$. 
We prove that $f(\lambda_n) = E(w_{\lambda_n}, \lambda_n) \to E(w_{\bar{\lambda}},\bar{\lambda}) = f(\bar{\lambda})$.
Since $(w_{\lambda_n})_{n \in \mathbb{N}}$ is bounded, there exists
a subsequence $(w_{k_n})_{n \in \N}$ such that $w_{k_n} \to \bar{w}$
for some $\bar{w} \in \R^d$. Now, since $\lambda_{k_n} \to \bar{\lambda}$
and the map $(w,\lambda) \mapsto L_\lambda(w)$ is jointly continuous, we have
\begin{align*}
\forall\, w \in \R^d,\quad
L_{\bar{\lambda}}(\bar{w}) &= \lim_{n} L_{\lambda_{k_n}}(w_{k_n}) \\
&\leq \lim_{n} L_{\lambda_{k_n}}(w) = L_{\bar{\lambda}}(w).
\end{align*}
Therefore, $\bar{w}$ is a minimizer of $L_{\bar{\lambda}}$ and hence
$\bar{w} = w_{\bar{\lambda}}$. This prove that 
$(w_{\lambda n})_{n \in \N}$ is a bounded sequence
having a unique cluster point. Hence $(w_{\lambda n})_{n \in \N}$ is convergence
to its unique cluster point, which is $w_{\bar{\lambda}}$. Finally,
since $(w_{\lambda_n},\lambda_n) \to (w_{\bar{\lambda}}, \bar{\lambda})$
and $E$ is jointly continuous, we have 
$E(w_{\lambda_n}, \lambda_n) \to E(w_{\bar{\lambda}},\bar{\lambda})$
and the statement follows.
\qed

We recal a fundamental fact concerning 
the stability of minima and minimizers in optimization problems \cite{dontchev93}. We provide the proof for completeness.

\begin{theorem}[Convergence]
\label{thm:stability}
Let $\varphi_T$ and $\varphi$ 
be lower semicontinuous functions defined on 
a compact set $\Lambda$. Suppose that
$\varphi_T$ converges uniformly to $\varphi$ on $\Lambda$ as $T\to +\infty$.
Then 
\vspace{-.2truecm}
\begin{enumerate}
\item[{\rm (a)}] $\inf \varphi_T \to \inf \varphi$,
\vspace{-.1truecm}
\item[{\rm (b)}] $\argmin \varphi_T \to \argmin \varphi$, meaning that, for every 
$(\lambda_T)_{T \in \mathbb{N}}$ such that $\lambda_T \in \argmin \varphi_T$, we have that:
\begin{itemize}
\item[-] $(\lambda_T)_{T \in \mathbb{N}}$ admits a convergent subsequence;
\item[-] for every subsequence $(\lambda_{K_T})_{T \in \mathbb{N}}$ such that $\lambda_{K_T} \to \bar{\lambda}$, we have $\bar{\lambda} \in \argmin \varphi$.
\end{itemize}
\end{enumerate}
\end{theorem}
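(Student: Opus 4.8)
The statement is a classical stability result, so the plan is to prove the two items directly from uniform convergence plus lower semicontinuity, using compactness of $\Lambda$ to extract convergent subsequences. Throughout, write $\varphi^* = \inf_\Lambda \varphi$ and $\varphi_T^* = \inf_\Lambda \varphi_T$; these infima are attained because a lower semicontinuous function on a compact set attains its minimum (Weierstrass), so $\argmin\varphi$ and $\argmin\varphi_T$ are nonempty. Let $\varepsilon_T = \sup_{\lambda\in\Lambda}|\varphi_T(\lambda) - \varphi(\lambda)|$, which tends to $0$ by hypothesis.

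\textbf{Step 1: convergence of the infima (item (a)).} The idea is the standard two-sided sandwich. For any $\lambda$ we have $\varphi_T(\lambda) \le \varphi(\lambda) + \varepsilon_T$, so taking the infimum over $\lambda$ gives $\varphi_T^* \le \varphi^* + \varepsilon_T$; symmetrically $\varphi^* \le \varphi_T^* + \varepsilon_T$. Hence $|\varphi_T^* - \varphi^*| \le \varepsilon_T \to 0$, which is exactly $\inf\varphi_T \to \inf\varphi$. This step is routine.

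\textbf{Step 2: subsequential convergence of minimizers (item (b)).} Fix any selection $\lambda_T \in \argmin\varphi_T$. Since $\Lambda$ is compact, $(\lambda_T)_T$ admits a convergent subsequence, giving the first bullet immediately. For the second bullet, suppose $\lambda_{K_T} \to \bar\lambda \in \Lambda$ along some subsequence. The goal is to show $\varphi(\bar\lambda) = \varphi^*$. On one hand, lower semicontinuity of $\varphi$ gives $\varphi(\bar\lambda) \le \liminf_T \varphi(\lambda_{K_T})$; wait, lower semicontinuity gives the reverse, $\varphi(\bar\lambda) \le \liminf_T \varphi(\lambda_{K_T})$ is \emph{not} what lsc yields — lsc yields $\varphi(\bar\lambda) \le \liminf \varphi(\lambda_{K_T})$ only for upper... let me be careful: lower semicontinuity means $\varphi(\bar\lambda) \le \liminf_T \varphi(\lambda_{K_T})$. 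That is correct: for lsc $\varphi$, $x_n \to x \Rightarrow \varphi(x) \le \liminf \varphi(x_n)$. Now bound $\varphi(\lambda_{K_T})$ from above: $\varphi(\lambda_{K_T}) \le \varphi_{K_T}(\lambda_{K_T}) + \varepsilon_{K_T} = \varphi_{K_T}^* + \varepsilon_{K_T}$, and by Step 1 the right-hand side converges to $\varphi^*$. Combining, $\varphi(\bar\lambda) \le \liminf_T \varphi(\lambda_{K_T}) \le \varphi^*$. Since $\varphi^* = \inf\varphi$ we also have $\varphi(\bar\lambda) \ge \varphi^*$, hence $\varphi(\bar\lambda) = \varphi^*$, i.e. $\bar\lambda \in \argmin\varphi$.

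\textbf{Main obstacle.} There is no serious obstacle: the result is elementary. The one place requiring care is getting the direction of the inequalities right — using lower semicontinuity of $\varphi$ to pass to the limit \emph{inferior} of $\varphi(\lambda_{K_T})$, and using uniform convergence to replace $\varphi(\lambda_{K_T})$ by $\varphi_{K_T}(\lambda_{K_T}) = \varphi_{K_T}^*$ up to $\varepsilon_{K_T}$. For the application to Theorem~\ref{thm:main}, one then needs to verify that $f_T$ and $f$ satisfy the hypotheses of this theorem: $f$ and $f_T$ are continuous (hence lsc) by Theorem~\ref{thm:existence} and the smoothness of $E$ and the $\Phi_t$, and $f_T \to f$ uniformly on $\Lambda$ because $f_T(\lambda) - f(\lambda) = E(w_{T,\lambda},\lambda) - E(w_\lambda,\lambda)$ is controlled by the uniform Lipschitz constant of $E(\cdot,\lambda)$ (assumption (v)) times $\sup_{\lambda\in\Lambda}\|w_{T,\lambda} - w_\lambda\|$, which tends to $0$ by assumption (vi). This reduction is where assumptions (v)–(vi) enter, and it should be stated explicitly when deducing Theorem~\ref{thm:main} from Theorem~\ref{thm:stability}.
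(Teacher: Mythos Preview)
Your proof is correct. The approach differs from the paper's in two small but notable ways.

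First, you prove (a) directly via the two-sided sandwich $|\varphi_T^* - \varphi^*| \le \varepsilon_T$, and then use (a) to establish (b). The paper reverses the order: it first shows (b) (and that $\varphi_{K_T}(\lambda_{K_T}) \to \inf\varphi$ along convergent subsequences), and then deduces (a) by a contradiction/subsequence argument. Your route is shorter and avoids the detour through subsequences to get (a).

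Second, in the limit-point step the paper writes ``using also the continuity of $\varphi$'' to get $\varphi(\bar\lambda) = \lim_T \varphi(\lambda_{K_T})$, which is slightly stronger than the stated lower-semicontinuity hypothesis. You use only $\varphi(\bar\lambda) \le \liminf_T \varphi(\lambda_{K_T})$, which is exactly what lsc gives and is all that is needed. So your argument is more faithful to the stated assumptions. (In the paper's actual application both $f$ and $f_T$ are continuous, so the slip is harmless there.)

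Your closing paragraph on deducing Theorem~\ref{thm:main} from this result via assumptions (v)--(vi) is exactly how the paper does it.

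One cosmetic remark: the mid-proof ``wait\ldots let me be careful'' passage where you second-guess the lsc inequality should be cleaned up before submission; you land on the correct inequality $\varphi(\bar\lambda) \le \liminf_T \varphi(\lambda_{K_T})$, so just state it once.
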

\begin{proof}
Let $(\lambda_T)_{T \in \N}$ be a sequence 
in $\Lambda$ such that,
for every $T \in \N$, $\lambda_T \in \argmin \varphi_T$. 
We prove that
\begin{enumerate}
\item[1)] $(\lambda_T)_{T \in \N}$ admits a convergent subsequence.
\item[2)] for every subsequence 
$(\lambda_{K_T})_{T \in \N}$ such that
$\lambda_{K_T} \to \bar{\lambda}$, we have $\bar{\lambda} \in \argmin \varphi$ and
$\varphi_{K_T}(\lambda_{K_T}) \to \inf \varphi$.
\item[3)] $\inf \varphi_T \to \inf \varphi$.
\end{enumerate}
The first point follows from the fact that $\Lambda$ is compact.\\
Concerning the second point, let $(\lambda_{K_T})_{T \in \N}$ be a subsequence
such that $\lambda_{K_T} \to \bar{\lambda}$. Since $\varphi_{K_T}$ converge uniformly
to $\varphi$, we have
\begin{equation*}
\lvert \varphi_{K_T}(\lambda_{K_T}) - \varphi(\lambda_{K_T}) \rvert 
\leq \sup_{\lambda \in \Lambda} \lvert \varphi_{K_T}(\lambda) - \varphi(\lambda) \rvert \to 0.
\end{equation*}
Therefore, using also the continuity of $\varphi$, we have
\begin{align*}
\forall\, \lambda \in \Lambda,\quad
\varphi(\bar{\lambda}) &= \lim_{T} \varphi(\lambda_{K_T}) 
=\lim_{T} \varphi_{K_T}(\lambda_{K_T}) \\
&\leq \lim_{T} \varphi_{K_T}(\lambda) = \varphi(\lambda).
\end{align*}
So, $\bar{\lambda} \in \argmin \varphi$ and $\varphi(\bar{\lambda}) = \lim_{T} \varphi_{K_T}(\lambda_{K_T}) \leq \inf \varphi = \varphi(\bar{\lambda})$, that is, $\lim_{T} \varphi_{K_T}(\lambda_{K_T}) = \inf \varphi$.\\
Finally, as regards the last point, we proceed by contradiction.
If $(\varphi_T(\lambda_T))_{T \in \N}$ does not convergce to $\inf f$,
then there exists an $\varepsilon>0$ and a subsequence 
$(\varphi_{K_T}(\lambda_{K_T}))_{T \in \N}$ such that
\begin{equation}
\label{eq:20180207c}
\lvert \varphi_{K_T}(\lambda_{K_T}) - \inf \varphi \rvert \geq \varepsilon, 
\quad \forall\, T \in \N
\end{equation}
Now, let $(\lambda_{K^{(1)}_T})$ be a convergent subsequence of 
$(\lambda_{K_T})_{T \in \N}$. Suppose that $\lambda_{K^{(1)}_T} \to \bar{\lambda}$.
Clearly $(\lambda_{K^{(1)}_T})$ is also a subsequence of $(\lambda_T)_{T \in \N}$.
Then, it follows from point 2) above that 
$\varphi_{K^{(1)}_T}(\lambda_{K^{(1)}_T}) \to \inf \varphi$. This latter finding together with
equation \eqref{eq:20180207c} gives a contradiction.
\end{proof}

{\em Proof of Theorem~\ref{thm:main}.}
Since $E(\cdot, \lambda)$ is uniformly Lipschitz continuous, there exists 
$\nu>0$ such that for every $T \in \mathbb{N}$
and every $\lambda \in \Lambda$
\begin{align*}
\lvert f_T(\lambda) - f(\lambda) \rvert 
&= \lvert E(w_{T,\lambda},\lambda) - E(w_\lambda,\lambda) \rvert\\
&\leq \nu \lVert w_{T,\lambda} - w_\lambda \rVert.
\end{align*}
It follows from assumption (vi)
that $f_T(\lambda)$ converges to $f(\lambda)$
uniformly on $\Lambda$ as $T\to +\infty$. Then
 the statement follows from Theorem~\ref{thm:stability}
\qed

\section{Cross-validation and Bilevel Programming} \label{sec:crossval}

We note that the (approximate) bilevel programming framework easily accommodates also estimations of the generalization error generated by a cross-validation procedures. We describe here the case of $K$-fold cross-validation, which includes also leave-one-out cross validation. 

Let $D=\{(x_i,y_i)\}_{i=1}^n$ be the set of available data; $K$-fold cross validation, with $K\in\{1, 2, \dots,  N\}$ consists in partitioning $D$ in $K$ subsets $\{D^j\}_{j=1}^K$ and fit as many models $g_{w^j}$ on training data $D^j_{\operatorname{tr}}=\bigcup_{i\neq j} D^i$. The models are then evaluated on $D^j_{\operatorname{val}}=D^j$. Denoting  by $w=(w^j)_{j=1}^K$ the vector of stacked weights, the $K$-fold cross validation error is given by
\begin{equation*}
E(w,\lambda) = \frac{1}{K}\sum_{j=1}^K E^j(w^j, \lambda)
\end{equation*}
where $E^j(w^j, \lambda)= \sum_{(x,y) \in D^j} \ell(g_{w^j}(x),y)
$.
$E$ can be treated as the outer objective in the bilevel framework, while the inner objective $L_{\lambda}$ may be given by the sum of regularized empirical errors over each $D^j_{\operatorname{tr}}$ for the $K$ models. 
Under this perspective, a $K$-fold cross-validation procedure closely resemble the bilevel problem for ML formulated in Sec. \ref{sec:l2l}, where, in this case, the meta-distribution collapses on the data (ground) distribution and the episodes are sampled from the same dataset of points.

By following the procedure outlined in Sec. \ref{sec:gradapproach} we can approximate the minimization of $L_{\lambda}$ with $T$ steps of an optimization dynamics and compute the hypergradient of $f_T(\lambda)=\frac{1}{K}\sum_j E^j(w^j_T,\lambda)$ by training the $K$ models and proceed with either forward or reverse differentiation. The models may be fitted sequentially, in parallel or stochastically. 
Specifically, in this last case, one can repeatedly sample one fold at a time (or possibly a mini-batch of folds) and compute a stochastic hypergradient that can be used in a SGD procedure in order to minimize $f_T$. 
At last, we note that similar ideas for leave-one out cross-validation error are developed in \cite{beirami2017optimal}, where the hypergradient of an approximation of the outer objective is computed by means of the implicit function theorem.  

\section{The Effect of $T$: Ridge Regression} \label{sec:Trr}

In Sec. \ref{sec:ex:eT} we showed that increasing the number of iterations $T$ leads to a better optimization of the outer objective $f$ through the approximations $f_T$, which converge uniformly to $f$ by Proposition \ref{p:20170207a}. This, however, does not necessary results in better test scores due to possible overfitting of the training and validation errors, as we noted for the linear hyper-representation multiclass classification experiment in Sec. \ref{sec:ex:eT}. The optimal choice of $T$ appears to be, indeed, problem dependent: if, in the aforementioned experiment, a quite small $T$ led to the best test accuracy (after hyperparameter optimization), in this section we present a small scale linear regression experiment that benefits from an increasing number of inner iterations.

We generated 90 noisy synthetic data points with 30 features, of which only 5 were informative, and divided them equally to form training, validation and test sets. As outer objective we set the mean squared error $E(w)=||X_{\operatorname{val}}w - y_{\operatorname{val}}||^2$,
where $X_{\operatorname{val}}$ and $y_{\operatorname{val}}$ are the validation design matrix and targets respectively and $w\in\RSet^{30}$ is the weight vector of the model. We set as inner inner objective  
$$L_{\lambda}(w) = ||X_{\operatorname{tr}}w - y_{\operatorname{tr}}||^2 + \sum_{i=1}^{30} e^{\lambda_i} w_i^2$$ 
and optimized the $L^2$ vector of regularization coefficients $\lambda$ (equivalent to a diagonal Tikhonov regularization matrix).
The results, reported in Table \ref{tab:eT:linearregr}, show that in this scenario overfitting is not an issue  and 250 inner iterations yield the best test result. 
Note that, as in Sec. \ref{sec:ex:eT}, also this problem admits an analytical solution, from which it is possible to compute the hypergradient analytically and perform exact hyperparameter optimization, as reported in the last row of the Table \ref{tab:eT:linearregr}.

\begin{table}[h]
  \caption{Validation and test mean absolute percentage error (MAPE) for various values of $T$.}
  \label{tab:eT:linearregr}
  \begin{tabular}{lcc}
    \toprule
    $T$   &     Validation MAPE  &  Test MAPE\\
    \midrule
    $10$ &         $11.35$  &      $43.49$\\
    $50$  &          $1.28$  &       $5.22$\\
    $100$ &          $0.55$  &       $1.26$\\
    $250$  &         $0.47$  &       $0.50$\\
    \hline
    Exact &        $0.37$  &       $0.57$ \\
    \bottomrule
  \end{tabular}
\end{table}

\section{Further Details on Few-shot Experiments}
\label{ap:modelDetails}
This appendix contains implementation details of the representation mapping $h$ and the meta-learning hyperparameters used in the few-shot learning experiments of Sec.~\ref{sec:ex:hr}.

To optimize  the representation mapping, in all the experiments, we use Adam with learning rate set to $10^{-3}$ and a decay-rate of $10^{-5}$. We used the initialization strategy proposed in \citep{glorot2010understanding} for all the weights $\lambda$ of the representation.

For Omniglot experiments we used a meta-batch size of 32 episodes for five-way and of 16 episodes for 20-way. To train the episode-specific classifiers we set the learning rate to 0.1.

For one set of experiments with Mini-imagenet we used an hyper-representation (\textit{C4L}) consisting of 4 convolutional layers where each layer is composed by a convolution with 32 filters, a batch normalization followed by a ReLU activation and a 2x2 max-pooling. The classifiers were trained using mini-batches of 4 episodes for one-shot and 2 episodes for five-shot with learning rate set to 0.01.

The other set of experiments with Mini-imagenet employed a Residual Network (\textit{RN}) as the representation mapping, built of 4 residual blocks (with 64, 96, 128, 256 filters) and then the block that follows $\{1\times 1$ conv (2048 filters), avg pooling, $1 \times 1$ conv (512 filters) $\}$. Each residual block repeats  the following block 3 times $\{1 \times 1$ conv, batch normalization, leaky ReLU (leak 0.1)$\}$ before the residual connection. In this case the classifiers were optimized using mini-batches of 2 episodes for both one and five-shot with learning rate set to 0.04.

\end{document}